\definecolor{orcidlogocol}{HTML}{A6CE39}
\tikzset{
  orcidlogo/.pic={
    \fill[orcidlogocol] svg{M256,128c0,70.7-57.3,128-128,128C57.3,256,0,198.7,0,128C0,57.3,57.3,0,128,0C198.7,0,256,57.3,256,128z};
    \fill[white] svg{M86.3,186.2H70.9V79.1h15.4v48.4V186.2z}
                 svg{M108.9,79.1h41.6c39.6,0,57,28.3,57,53.6c0,27.5-21.5,53.6-56.8,53.6h-41.8V79.1z M124.3,172.4h24.5c34.9,0,42.9-26.5,42.9-39.7c0-21.5-13.7-39.7-43.7-39.7h-23.7V172.4z}
                 svg{M88.7,56.8c0,5.5-4.5,10.1-10.1,10.1c-5.6,0-10.1-4.6-10.1-10.1c0-5.6,4.5-10.1,10.1-10.1C84.2,46.7,88.7,51.3,88.7,56.8z};
  }
}
\newcommand\orcidicon[1]{\href{https://orcid.org/#1}{\mbox{\scalerel*{
\begin{tikzpicture}[yscale=-1,transform shape]
\pic{orcidlogo};
\end{tikzpicture}
}{|}}}}
\newcommand{\norm}[1]{\lVert#1\rVert}
\algrenewcommand\alglinenumber[1]{\footnotesize\textbf{#1}}
\newcommand{\StateNoIndent}{\Statex\hspace{-\algorithmicindent}}
\newtheorem{theorem}{Theorem}
\newtheorem{proposition}{Proposition}
\newtheorem{remark}{Remark}
\def\p{{\bm{p}}}
\def\pg{{\bm{p}^\mathrm{g}}}
\def\x{{\bm{x}}}
\def\dx{{\dot{\bm{x}}}}
\def\hx{{\hat{\bm{x}}}}
\def\dhx{{\dot{\hat{\bm{x}}}}}
\def\bu{{\bm{u}}}
\def\y{{\bm{y}}}
\def\z{{\bm{z}}}
\def\dz{{\dot{\bm{z}}}}
\def\r{{\bm{r}}}
\def\xr{{\bm{x}^\mathrm{r}}}
\def\ur{{\bm{u}^\mathrm{r}}}
\def\bv{{\bm{v}}}
\def\ds{{\dot{s}}}
\def\w{{\bm{w}}}
\def\hw{{\hat{w}}}
\def\bwo{{\bar{w}^\mathrm{o}}}
\def\bmeta{{\bm{\eta}}}
\def\hbmeta{{\hat{\bmeta}}}
\def\hw{{\hat{\bm{w}}}}
\def\hwb{{\hat{\bm{w}}^\mathrm{b}}}
\def\hv{{\hat{\bm{v}}}}
\def\tr{^\mathrm{r}}
\def\bdelta{{\bm{\delta}}}
\def\beps{{\bm{\epsilon}}}
\def\bzeta{{\bm{\zeta}}}
\def\fw{{f^\mathrm{w}}}
\def\bzeta{{\bm{\zeta}}}
\def\Ts{{T^\mathrm{s}}}
\def\0t{{0|t}}
\def\tt{{\tau|t}}
\def\tpt{{t+\tau}}
\def\ct{{\cdot|t}}
\def\Tt{{T|t}}
\def\tpT{{t+T}}
\def\Tpt{{T+\tau}}
\def\tpTsT{{t+\Ts+T}}
\def\TsT{{\Ts+T}}
\def\TTt{{\TsT|t}}
\def\gjs{{g_{j}^\mathrm{s}}}
\def\Ljs{{L_{j}^\mathrm{s}}}
\def\ljs{{l_{j}^\mathrm{s}}}
\def\cjs{{c_{j}^\mathrm{s}}}
\def\cjcs{{c_{j}^\mathrm{c,s}}}
\def\cjso{{c_{j}^\mathrm{s,o}}}
\def\gjtto{{g_{j,\tt}^\mathrm{o}}}
\def\Ljtto{{L_{j,\tt}^\mathrm{o}}}
\def\ljtto{{l_{j,\tt}^\mathrm{o}}}
\def\co{{c^\mathrm{o}}}
\def\cco{{c^\mathrm{c,o}}}
\def\nx{{n^\mathrm{x}}}
\def\np{{n^\mathrm{p}}}
\def\nuu{{n^\mathrm{u}}}
\def\ny{{n^\mathrm{y}}}
\def\nw{{n^\mathrm{w}}}
\def\neta{{n^\mathrm{\eta}}}
\def\ns{{n^\mathrm{s}}}
\def\no{{n^\mathrm{o}}}
\def\Inx{{I^{\nx}}}
\def\Ineta{{I^{\neta}}}
\def\R{{\mathbb{R}}}
\def\Rg0{{\mathbb{R}_{> 0}}}
\def\Rgeq0{{\mathbb{R}_{\geq 0}}}
\def\Rx{{\mathbb{R}^{\nx}}}
\def\Ru{{\mathbb{R}^{\nuu}}}
\def\Ry{{\mathbb{R}^{\ny}}}
\def\Reta{{\mathbb{R}^{\neta}}}
\def\Rxu{{\mathbb{R}^{\nx+\nuu}}}
\def\Rytx{{\mathbb{R}^{\ny\times\nx}}}
\def\Rp{{\mathbb{R}^{\np}}}
\def\N{{\mathbb{N}}}
\def\Rw{{\mathbb{R}^{\nw}}}
\def\RE{{\mathbb{R}^{\nx{\times}\nw}}}
\def\RF{{\mathbb{R}^{\ny{\times}\neta}}}
\def\RL{{\mathbb{R}^{\nx\times\ny}}}
\def\Ns{{\mathbb{N}_{[1,\ns]}}}
\def\No{{\mathbb{N}_{[1,\no]}}}
\def\B{{\mathcal{B}}}
\def\I{{\mathcal{I}}}
\def\Rc{{\mathcal{R}^\mathrm{c}}}
\def\Rr{{\mathcal{R}^\mathrm{r}}}
\def\L1{{\mathcal{L}_1}}
\def\F{{\mathcal{F}}}
\def\J{{\mathcal{J}}}
\def\U{{\mathcal{U}}}
\def\W{{\mathcal{W}}}
\def\hW{{\hat{\W}}}
\def\H{{\mathcal{H}}}
\def\hH{{\hat{\H}}}
\def\X{{\mathcal{X}}}
\def\Xf{{\mathcal{X}^\mathrm{f}}}
\def\Z{{\mathcal{Z}}}
\def\O{{\mathcal{O}}}
\def\hxz{{\hx,\z}}
\def\kappafhxr{{\kappa^\mathrm{f}(\hx,\r)}}
\def\kappad{{\kappa^\delta}}
\def\kappadhxz{{\kappad(\hxz)}}
\def\Tmeas{{T^\mathrm{M}}}
\def\Tmhe{{T^\mathrm{H}}}
\def\Jf{{\J^\mathrm{f}}}
\def\Js{{\J^\mathrm{s}}}
\def\Ts{{T^\mathrm{s}}}
\def\g{{\bm{\gamma}}}
\def\gxs{{\g^\mathrm{x}(s)}}
\def\gxss{{\frac{\partial \gxs}{\partial s}}}
\def\Vd{{V^\delta}}
\def\Vdhxz{{V^\delta(\hxz)}}
\def\Vdhxz{{V^\delta(\hxz)}}
\def\Pd{{P^\delta}}
\def\Kd{{K^\delta}}
\def\Xd{{X^\delta}}
\def\Yd{{Y^\delta}}
\def\Aarg{{A(\bzeta)}}
\def\Barg{{B(\bzeta)}}
\def\Ydarg{{Y^\delta(\x)}}
\def\Kdgs{{K^\delta(\gxs)}}
\def\opdiag{\operatorname{diag}}
\def\opmin{{\operatorname{min}}}
\def\optrace{{\operatorname{trace}}}
\def\opvert{{\operatorname{vert}}}
\def\lambdadelta{{\lambda^\delta}}
\def\lambdaeps{{\lambda^\epsilon}}
\def\lambdadeltaeps{{\lambda^{\delta,\epsilon}}}
\def\dotp{{\dot{\p}}}
\def\bv{{\bm{v}}}
\def\dv{{\dot{\bv}}}
\def\bOmega{{\bm{\Omega}}}
\def\dOmega{{\dot{\bOmega}}}
\def\btau{{\bm{\tau}}}
\def\BIRc{{\prescript{\I}{\B}{\Rc}}}
\def\BIRr{{\prescript{\I}{\B}{\Rr}}}
\title{\LARGE From Data to Safe Mobile Robot Navigation:\\An Efficient and Modular Robust MPC Design Pipeline}
  \author{
    \thanks{Code available at https://anonymous.4open.science/r/rohmpc-anonymous-0790.}
  }%
  \author{Dennis Benders$^{\orcidicon{0000-0002-6648-7128}}$, Johannes K\"{o}hler$^{\orcidicon{0000-0002-5556-604X}}$, Robert Babu\v{s}ka$^{\orcidicon{0000-0001-9578-8598}}$, Javier Alonso-Mora$^{\orcidicon{0000-0003-0058-570X}}$, and Laura Ferranti$^{\orcidicon{0000-0003-3856-6221}}$%
  \thanks{This work was supported by the National Police of the Netherlands. All content represents the opinion of the author(s), which is not necessarily shared or endorsed by their respective employers and/or sponsors. Laura Ferranti received support from the Dutch Science Foundation NWO-TTW Foundation within the Veni project HARMONIA (nr. 18165).}%
  \thanks{Dennis Benders, Robert Babu\v{s}ka, Javier Alonso-Mora and Laura Ferranti are with the department of Cognitive Robotics, Delft University of Technology, 2628 CD Delft, The Netherlands (email: \{d.benders, r.babuska, j.alonsomora, l.ferranti\}@tudelft.nl).}%
  \thanks{Robert Babu\v{s}ka is also with the Czech Institute of Informatics, Robotics and Cybernetics, Czech Technical University in Prague.}%
  \thanks{Johannes K\"{o}hler is with the Institute for Dynamic Systems and Control, ETH Z\"{u}rich, CH-8092, Switzerland (email: jkoehle@ethz.ch).}%
  \thanks{Code available at https://github.com/dbenders1/rohmpc.}%
  }
\begin{document}

\maketitle
\thispagestyle{empty}
\pagestyle{empty}

\begin{abstract}
Model predictive control (MPC) is a powerful strategy for planning and control in autonomous mobile robot navigation. However, ensuring safety in real-world deployments remains challenging due to the presence of disturbances and measurement noise. Existing approaches often rely on idealized assumptions, neglect the impact of noisy measurements, and simply heuristically guess unrealistic bounds. In this work, we present an efficient and modular robust MPC design pipeline that systematically addresses these limitations. The pipeline consists of an iterative procedure that leverages closed-loop experimental data to estimate disturbance bounds and synthesize a robust output-feedback MPC scheme. We provide the pipeline in the form of deterministic and reproducible code to synthesize the robust output-feedback MPC from data. We empirically demonstrate robust constraint satisfaction and recursive feasibility in quadrotor simulations using Gazebo.
\end{abstract}

\section{Introduction}\label{sec:intro}
Mobile robots have significant potential to meet societal needs \cite{siegwart2011introduction}, for example, in autonomous search and rescue operations \cite{delmerico2019current}, intralogistics \cite{fragapane2021planning}, and self-driving vehicles \cite{paden2016survey}. These applications require robots to navigate autonomously while avoiding collisions. To address this challenge, various planning techniques have been developed, including reactive \cite{khatib1986real}, sampling-based \cite{karaman2011sampling}, and optimization-based methods \cite{tordesillas2021faster}. These planners typically use nonlinear dynamical system models, which, despite their accuracy, cannot capture all real-world behaviors, leading to model uncertainty. 
In addition, real-world experiments cannot provide accurate state information and only noisy estimates can be derived from sensor data.
If not explicitly considered, such errors can jeopardize robot safety, potentially causing crashes \cite{hwang2024safe}. Therefore, this work aims to ensure collision-free navigation despite uncertainties in the nonlinear robot model.

Several methods have been proposed to handle model uncertainty and ensure safe motion planning. These include control barrier functions \cite{ames2016control,rosolia2022unified}, reachability-based methods \cite{kousik2020bridging,chen2021fastrack}, and model predictive control (MPC) \cite{rawlings2017model}. MPC approaches can be categorized into stochastic MPC, providing probabilistic safety guarantees, and robust MPC, offering deterministic safety guarantees \cite{kouvaritakis2015model}.

This work focuses on robust MPC, optimizing trajectories to remain feasible and collision-free under worst-case disturbances. 
To mitigate conservatism, feedback controllers are used \cite{bemporad1998reducing}, for example in funnel-based approaches \cite{tedrake2010lqr,majumdar2017funnel} and tube MPC \cite{mayne2011tube,lopez2019dynamic,rakovic2022homothetic}. For nonlinear robot dynamics, recent approaches leverage contraction metrics \cite{tsukamoto2021contraction} to derive bounds on the tracking error that are then leveraged in the MPC, see~\cite{singh2023robust,zhao2022tube,sasfi2023robust}.

\begin{figure}[t]
    \centering
    \includesvg[inkscapelatex=false,width=\columnwidth]{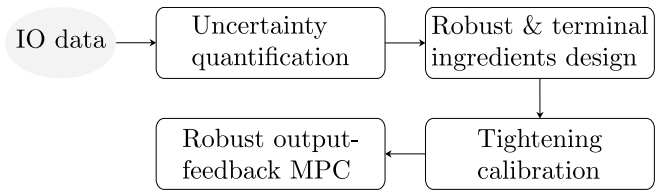}
    \caption{Main contribution: a robust output-feedback MPC design pipeline, starting from input-output (IO) data up to the synthesis of a robust output-feedback MPC scheme.}
    \label{fig:pipeline}
\end{figure}

Designing robust MPC schemes requires knowledge of the worst-case model uncertainty, which includes disturbances, parametric uncertainty, and measurement noise. State-of-the-art methods often assume simplified uncertainty sets, such as linear drag models \cite{singh2023robust}, norm-bounded wind disturbances \cite{zhao2022tube}, and polytopic wind disturbances and mass uncertainty \cite{sasfi2023robust}. These assumptions can be satisfied in simulations, but are not realistic in experimental environments. 
Although, e.g.,~\cite{singh2023robust}, has empirically demonstrated safety in hardware experiments using a quadrotor, this also relied on heuristically added safety margins of over half a meter. %reference: page 25, 52cm 
To facilitate the reliable deployment of robust MPC in real-world applications, this work aims to infer bounds on the model uncertainty directly from noisy measured data.

Obtaining accurate and realistic bounds on uncertainties is crucial for the performance and reliability of robust MPC schemes. Methods to quantify model uncertainty include learning-based approaches such as Gaussian Processes (GPs) and neural networks~\cite{lee2022trust}, which use statistical tools to provide probabilistic bounds. However, these tools rely on strong assumptions about the distribution of the noise and can typically not deal with noisy inputs (cf.~\cite{mchutchon2011gaussian}). 
Distribution-agnostic conformal prediction \cite{lindemann2024formal} offers another approach, though it requires samples of the uncertainty. 
Set-membership estimation \cite{lauricella2020set} also provides distribution-agnostic bounds, but is susceptible to outliers.  
In general, the key challenge in quantifying uncertainty relates to the fact that the model error and states are not measured. 
Popular approaches for estimation include the extended Kalman filter and particle filter~\cite{barfoot2024state}, but they suffer from linearized approximations or sample inefficiency, respectively. Moving horizon estimation (MHE) \cite[Chap.~4]{rawlings2017model} is a modern optimization-based estimation method, which can be applied to general nonlinear robot dynamics to compute optimal estimates. 

In summary, despite the strong theoretical foundation of robust MPC, applications typically rely on simplifying assumptions about the model uncertainty that do not accurately reflect experiments.  To overcome these limitations and move towards efficient and reliable applications of robust MPC, we contribute an efficient and modular design pipeline, which is visualized in Figure~\ref{fig:pipeline}. Our main contributions are:
\begin{enumerate}
    \item Uncertainty quantification: A scheme to determine bounds on the model uncertainty for general nonlinear systems using a simple iteration involving an MHE formulation.
    \item A robust output-feedback MPC-based planning and tracking framework (ROHMPC) that ensures safety and recursive feasibility in the presence of disturbances and measurement noise (Theorem~\ref{thm:rohmpc}). 
    \item An efficient, modular, and reproducible software pipeline, identifying model uncertainty,  computing robust and terminal ingredients, yielding a robust MPC implementation that can be directly deployed. 
\end{enumerate}
We apply the pipeline in simulation experiments with the Gazebo \emph{RotorS} package \cite{furrer2016rotors} including significant structural model mismatch and noisy measurements, yielding an empirically safe robust MPC within 2 hours.

\section{Problem formulation}\label{sec:problem}
In this work, we consider a mobile robot described by the following Lipschitz continuous nonlinear system:
\begin{subequations}\label{eq:sys}
    \begin{align}
        \dx_t &= \fw(\x_t,\bu_t,\w_t) \coloneqq f(\x_t, \bm{u}_t) + E \w_t,\label{eq:sys_xdot}\\
        \y_t &= C \x_t + F \bmeta_t,\label{eq:sys_y}
    \end{align}
\end{subequations}
 with time $t \in \R$, state $\x_t \in \Rx$, input $\bm{u}_t \in \Ru$, output $\y_t \in \Ry$, disturbance selection matrix $E \in \RE$, disturbance $\w_t \in \Rw$, output matrix $C \in \Rytx$, measurement noise selection matrix $F \in \RF$, and measurement noise $\bmeta_t \in \Reta$. The system should satisfy polytopic state and input constraints
\begin{align}\label{eq:con_sys}
    \Z &\coloneqq \X \times \U \notag\\
    &= \left\{(\x,\bm{u}) \in \Rxu | \gjs(\x,\bm{u}) \leq 0, \ j \in \Ns\right\},
\end{align}
with $\gjs(\x,\bm{u}) = \Ljs \begin{bmatrix}\x\\\bm{u}\end{bmatrix} - \ljs, \Ljs \in \mathbb{R}^{1 \times (n^\mathrm{x}+n^\mathrm{u})}, \ljs \in \R, j \in \Ns$.

The goal is to design an MPC scheme that navigates the mobile robot from its current position $\p_t = M \x_t \in \Rp$ to goal position $\pg \in \Rp$ without colliding with obstacles, where $\Rp=\mathbb{R}^3$. To avoid obstacles, we leverage time-varying polytopic sets representing the free spaces between obstacles, given by \cite{benders2025embedded}:
\begin{equation}\label{eq:con_obs}
    \F_{\tt} \coloneqq \left\{\p_{\tt}\ \middle|\ \gjtto(\p_{\tt}) \leq 0,\ j \in \No\right\},
\end{equation}
with $\gjtto(\p_{\tt}) = \Ljtto \p_{\tt} - \ljtto, \Ljtto \in \mathbb{R}^{1 \times n^\mathrm{p}}$, $\ljtto \in \mathbb{R},$ $j \in \No$ and where $\p_{\tt}$ indicates the position at MPC prediction time $\tau$ at run time $t$. Without loss of generality, we set $\norm{\Ljtto} = 1, j \in \No$ by scaling the constraints.

\textit{General approach:} We use a robust output-feedback MPC formulation to ensure collision avoidance despite the presence of disturbances and measurement noise. This formulation tightens the constraints based on the impact of the worst-case disturbance on the system. Correspondingly, we would like to obtain a bound in the form $\w \in \W$ and $\bmeta \in \H$, with polytopic disturbance and noise sets $\W$, $\H$. However, the sets $\W$ and $\H$ are not trivial to obtain since both $\w$ and $\bmeta$ cannot be directly measured on the real system. Therefore, we propose a method to offline estimate $\W$ and $\H$ using an iterative MHE approach, as described in Section~\ref{sec:uq}. These bounds are then used in the robust output-feedback MPC design described in Section~\ref{sec:rohmpc}. This section also provides the theoretical analysis of the scheme, ensuring recursive feasibility, robust constraint satisfaction, and a bounded tracking error. Finally, Section~\ref{sec:results} demonstrates the results of the proposed uncertainty quantification method and the closed-loop properties of the ROHMPC scheme using a quadrotor simulation in Gazebo.

\section{Uncertainty quantification for general nonlinear systems}\label{sec:uq}
Given model \eqref{eq:sys} and a time series of IO data $(\bu,\y)$ of length $\Tmeas$, the goal of this section is to quantify the model uncertainty, i.e., estimate disturbance $\w$ and measurement noise $\bmeta$. Specifically, we find estimated values $\hw$ and $\hbmeta$ using the following formulation:

\begin{subequations}\label{eq:mhe}
    \begin{alignat}{2}
        \underset{\hx_{\ct},\hw_{\ct},\hbmeta_{\ct}}{\operatorname{min}}\ \ &\mathrlap{\int_{\tau=0}^{\Tmhe} \norm{\hw_{\tt}}_Q^2 + \norm{\hbmeta_{\tt}}_R^2\ d\tau,}&&\hspace{1pt}\label{eq:mhe_obj}\\
        \operatorname{s.t.}\ &\dhx_\tt &&= f(\hx_{\tt},\bm{u}_{\tpt}) + E\hw_{\tt},\label{eq:mhe_x}\\
        &\y_{\tpt} &&= C\hx_{\tt} + F\hbmeta_{\tt},\label{eq:mhe_y}\\
        &\tau &&\in [0,\Tmhe],\notag
    \end{alignat}
\end{subequations}
which is a receding horizon estimation scheme with prediction horizon $\Tmhe$ and weighting matrices $Q$ and $R$, comparable to MHE \cite[Chap.~4]{rawlings2017model}. To obtain estimates $\hw$ and $\hbmeta$ that are representative of the real $\w$ and $\bmeta$ encountered during closed-loop experiments, it is beneficial to cover a large part of the state-input space $\Z$. This requires a larger data length $\Tmeas$. However, collecting more data increases the computational complexity. To prevent intractability of \eqref{eq:mhe}, we use a receding horizon implementation with horizon length $\Tmhe < \Tmeas$. Weighting matrices $Q$ and $R$ should reflect the inverse covariance matrices of the disturbance and measurement noise. Since their exact values are unknown, we repeatedly solve \eqref{eq:mhe} and update $Q$ and $R$ until convergence of their eigenvalues. This iteration is comparable to the expectation-maximization (EM) algorithm classically used for identification~\cite{gibson2005robust}. In particular, the optimized cost in~\eqref{eq:mhe} is proportional to the neg-log likelihood (assuming noise is Gaussian distributed), and as we see in the experiments (Sec.~\ref{sec:results}), this iteration monotonically improves estimates. 
After this loop, we compute the bounding box $\hW$ containing all estimated disturbance samples $\hw_{\tt}, \tau=\frac{\Tmhe}{2}, t \in [0,\Tmeas]$, since the estimated values in the middle of the horizon are the most accurate, and set the model bias $\hwb$ equal to its center. The bias can be used to increase the accuracy of the nominal model. Bounding box $\hH$ is computed similarly, however, centered around $\bm{0}$. The following algorithm summarizes the procedure:

\begin{algorithm}
    \caption{Uncertainty quantification}\label{alg:mhe}
    \begin{algorithmic}[1]
        \StateNoIndent \textbf{Input:} $\bm{u}$, $\y$, $f$, $E$, $C$, $F$, $\Tmhe$, $Q > 0$, $R > 0$
        \While{$\mathrm{eig}(Q)$ and $\mathrm{eig}(R)$ not converged}
            \For{$t\in\{\Tmhe,\ldots,\Tmeas\}$}
                \State Solve \eqref{eq:mhe} with $Q$ and $R$
                \State Store $\hw$ and $\hv$
            \EndFor
            \State Update $Q = \operatorname{cov}(\hw)$
            \State Update $R = \operatorname{cov}(\hv)$
        \EndWhile
        \State Compute bounding box $\hW$ and bias $\hwb$: $\hw\in\hwb\oplus\hW$
        \State Compute bounding box $\hH:$ $\hv\in \hH$
    \end{algorithmic}
\end{algorithm}

\section{Robust output-feedback MPC design}\label{sec:rohmpc}
The goal of this section is to describe the robust MPC design that ensures safe and autonomous navigation towards $\pg$ while robustly avoiding collisions in the presence of $\w$ and $\bmeta$. To this end, we summarize the robust output-feedback MPC theory described in \cite{step2025guide} and connect it to the uncertainty quantification method from previous section. Furthermore, we propose the ROHMPC scheme, which is based on the HMPC framework in \cite{benders2025embedded}, using a co-designed planning MPC (PMPC) and tracking MPC (TMPC) scheme.
The PMPC formulation only requires an adjusted model and tightening margins. Therefore, the focus in this section is on the robust TMPC design. Section~\ref{subsec:tmpc_robust} describes the offline robust output-feedback design. This includes the combined design of an observer and an incremental Lyapunov function with the corresponding feedback law. The upper bound of this Lyapunov function, called the tube size, is used to formulate a simple constraint-tightening TMPC scheme in Section~\ref{subsec:tmpc_formulation}. Section~\ref{subsec:tmpc_term} describes the offline design of suitable terminal ingredients used to prove robust constraint satisfaction and recursive feasibility of the ROHMPC scheme in Section~\ref{subsec:analysis}.

\subsection{Offline robust output-feedback design}\label{subsec:tmpc_robust}
The main challenge in designing robust output-feedback MPC schemes is to guarantee that the closed-loop system satisfies the original constraints in the presence of disturbances and measurement noise. To solve this problem, we compute an estimated state $\hx$ based on noisy measurements $\y$ and make sure that both controller error $\bdelta\coloneqq\hx-\z$, with respect to nominal trajectory $\z$, and observer error $\beps\coloneqq\x-\hx$, with respect to actual state $\x$, are bounded.

To ensure boundedness of $\bdelta$, we design the control law
\begin{equation}\label{eq:u_cl}
    \bu_{\tpt} \coloneqq \bv_{\tt}^* + \kappad(\hx_{\tpt},\z_{\tt}^*),\ \tau \in [0,\Ts],
\end{equation}
which consists of a nominal input $\bv_{\tt}^*$, recomputed every $\Ts$ seconds by solving the TMPC formulation described in Section~\ref{subsec:tmpc_formulation}, and a continuous feedback law $\kappad(\hx_{\tpt},\z_{\tt}^*)$ that is applied between these discrete sampling times. Furthermore, to compute $\hx$ and ensure the boundedness of $\beps$, we design an observer with the following dynamics:
\begin{equation}\label{eq:observer}
    \dhx_t \coloneqq f(\hx_t,\bu_t) + E\hwb + L(\y_t-\hx_t),
\end{equation}
with observer gain $L\in\RL$.

\begin{figure*}[!ht]
    \normalsize
    \begin{subequations}\label{eq:sdp}
        \begin{align}
            \underset{\substack{\Xd,\Ydarg,\\{\cjs}^2,{\co}^2,\epsilon^2}}{\opmin}\ \ &\cco {\co}^2 + \sum_{j=1}^{\ns} \cjcs {\cjs}^2 + c^\epsilon \epsilon^2,\label{eq:sdp_obj}\\
            \operatorname{s.t.}\ &\Xd \succeq 0,\label{eq:sdp_lmi_x}\\
            &\Aarg\Xd+\Barg\Ydarg+\left(\Aarg\Xd+\Barg\Ydarg\right)^\top+2\rho \Xd \preceq 0,\label{eq:sdp_lmi_contr}\\
            &\begin{bmatrix}
                \Aarg\Xd+\Barg\Ydarg+\left(\Aarg\Xd+\Barg\Ydarg\right)^\top+\lambdadelta\Xd&LC\Xd&LF\bmeta\\
                \left(LC\Xd\right)^\top&-\lambdadeltaeps\Xd&\bm{0}\\
                \left(LF\bmeta\right)^\top&\bm{0}&\lambdadeltaeps\epsilon^2-\lambdadelta\delta^2
            \end{bmatrix} \preceq 0,\label{eq:sdp_lmi_rpi_delta}\\
            &\begin{bmatrix}
                \Xd\left(\Aarg-LC\right)^\top + \left(\Aarg-LC\right)\Xd + \lambdaeps\Xd&E\w^0-LF\bmeta\\
                \left(E\w^0-LF\bmeta\right)^\top&-\lambdaeps\epsilon^2
            \end{bmatrix} \preceq 0,\label{eq:sdp_lmi_rpi_eps}\\
            &\begin{bmatrix}
                {\cjs}^2&\Ljs\begin{bmatrix}\Ydarg\\\Xd\end{bmatrix}\\
                \left(\Ljs\begin{bmatrix}\Ydarg\\\Xd\end{bmatrix}\right)^\top&\Xd
            \end{bmatrix} \succeq 0,\ j \in \Ns,\label{eq:sdp_lmi_sys}\\
            &\begin{bmatrix}
                {\co}^2 I^{\np}&M\Xd\\
                \left(M\Xd\right)^\top&\Xd
            \end{bmatrix} \succeq 0,\label{eq:sdp_lmi_obs}\\
            &\forall \bzeta\coloneqq\begin{bmatrix}
                \bu\\\x
            \end{bmatrix}\in\Z, \quad \w^0\in\opvert(\hW), \quad \bmeta\in\opvert(\hH).\notag
        \end{align}
    \end{subequations}
\end{figure*}

Leveraging robust MPC designs using contraction metrics \cite{singh2023robust,zhao2022tube,sasfi2023robust}, including their extension to the output-feedback case \cite{kohler2019simple,step2025guide}, we jointly optimize for an incremental Lyapunov function with constant metric $\Pd\in\mathbb{R}^{\mathrm{n^x}\times \mathrm{n^x}}$ and corresponding feedback law $\kappadhxz$ by solving the semi-definite program (SDP) \eqref{eq:sdp}; see \cite{step2025guide} for more details. This problem uses a given observer gain $L$, and treats the contraction rate $\rho>0$ and multipliers $\lambdadelta,\lambdadeltaeps,\lambdaeps\geq 0$ as hyperparameters. Furthermore, the Jacobians in \eqref{eq:sdp} are given by
\begin{equation}\label{eq:A_B}
    A(\bzeta) = \left.\frac{\partial \fw(\x,\bu,\w)}{\partial \x}\right|_{\bzeta}, \quad B(\bzeta) = \left.\frac{\partial \fw(\x,\bu,\w)}{\partial \bu}\right|_{\bzeta},
\end{equation}
The resulting incremental Lyapunov function,
\begin{equation}
    \Vdhxz \coloneqq (\hx-\z)^\top \Pd (\hx-\z),
\end{equation}
quantifies controller error $\bdelta$ and is guaranteed to contract by at least rate $\rho$ under robust feedback law
\begin{equation}\label{eq:kappad}
    \kappad(\hx,\z) = \bv + \int_{0}^{1} \Kdgs ds \bdelta,
\end{equation}
where $\gxs = \z + s\bdelta$ is the geodesic, which is a straight line connecting $\hx$ and $\z$ with derivative $\gxss = \bdelta$.
The matrices $\Pd$ and $\Kd(\gxs)$ are obtained by the following change of coordinates:
\begin{equation}
    \Pd = \Xd^{-1},\quad \Kd(\gxs) = \Yd(\gxs) \Pd,
\end{equation}
This contractivity property is imposed by the linear matrix inequality (LMI) \eqref{eq:sdp_lmi_contr}.

\begin{remark}\label{rem:state-dependency}
    Note that one could use a state-dependent metric \cite{singh2023robust,zhao2022tube,sasfi2023robust} to reduce conservatism. In this case, evaluating the geodesic $\gxs$ and the Lyapunov function $\Vdhxz$ would require solving an optimization problem. Since we 
    Since we require the evaluation of $\Vdhxz$ in the terminal set constraint \eqref{eq:tmpc_term}, this would have resulted in a significant increase in the computational demand. 
    In the special case of a constant feedback matrix $\Kd$, the control law \eqref{eq:u_cl} reduces to $\bv + \Kd\bdelta$, which is also leveraged in the experiments later.
\end{remark}
We bound $\Vdhxz$ from above by tube size $s$ as
\begin{equation}\label{eq:s_tube}
    \sqrt{\Vd(\hx_\tt,\z_\tt)} \leq s_\tau,\ t\geq 0,\ \tau\in[0,T],
\end{equation}
with $\dot{s}_\tau=-\rho s_\tau + \bwo$. Thus, controller error $\bdelta$ is bounded by tube size $s$. Furthermore, $s$ grows with bounded factor $\bwo > 0$ and contracts with rate $\rho$. Due to its $\rho$-contractivity, $s$ cannot grow unbounded, making it suitable as a constraint tightening factor for system constraints \eqref{eq:con_sys} and obstacle avoidance constraints \eqref{eq:con_obs}.

To show constraint satisfaction for $\x$, we also need to account for the difference between $\hx$ and $\x$. By the UQ method in Section~\ref{sec:uq}, we know that the measurement noise is bounded, i.e., $\hbmeta\in\hH$. Combined with observer \eqref{eq:observer}, we can construct a constant tube $\epsilon$ bounding $\beps$ as
\begin{equation}\label{eq:eps_tube}
    \sqrt{\Vd(\x_t,\hx_t)} \leq \epsilon,\ t\geq 0.
\end{equation}
Thus, we ensure satisfaction of constraints in \eqref{eq:con_sys} and \eqref{eq:con_obs} by tightening the constraints on the nominal trajectory $z$ proportional to the two error bounds $\epsilon,s$ using suitable constraint tightening constants. 
The SDP \eqref{eq:sdp} optimizes for the smallest constraint tightening constants $\cjs,\cjso,j\in\Ns$, with
\begin{equation}
    \cjso = \begin{cases}
        0, &j\in\N_{[1,2\nuu]}\\
        \cjs, &j\in\N_{[2\nuu+1,\ns]}
    \end{cases},
\end{equation}
and $\co$. This is imposed by the combination of \eqref{eq:sdp_obj}, \eqref{eq:sdp_lmi_rpi_delta}, \eqref{eq:sdp_lmi_rpi_eps}, \eqref{eq:sdp_lmi_sys}, and \eqref{eq:sdp_lmi_obs}. To avoid conservatism in certain directions of the state space, the constraint tightening constants are normalized by their corresponding constant constraint intervals $\cjcs$ and $\cco$ in \eqref{eq:sdp_obj}. In this case, $\cco$ represents the minimum desired distance to obstacles. Moreover, $c^\epsilon$ is a penalty term to reduce $\epsilon$.

Proposition~\ref{prop:robust} formalizes the properties of the robust output-feedback design:
\begin{proposition}[Robust output-feedback design]\label{prop:robust}
Consider a nominal trajectory $(\z,\bv)$ satisfying \eqref{eq:con_sys} tightened by $\cjs s_\tau + \cjso \epsilon, j\in\Ns$ and \eqref{eq:con_obs} tightened by $\co (s_\tau+\epsilon), j\in\No$. Then, the closed-loop trajectory $(\x,\bu)$ under robust feedback law \eqref{eq:u_cl} and observer \eqref{eq:observer} subject to disturbances $\hw\in\hwb\oplus\hW$ and measurement noise $\hbmeta\in\hH$ satisfies system constraints \eqref{eq:con_sys} and obstacle avoidance constraints \eqref{eq:con_obs}.
\end{proposition}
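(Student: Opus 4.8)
The plan is to proceed in two stages: first establish the two tube inclusions \eqref{eq:s_tube} and \eqref{eq:eps_tube}, i.e.\ that the bound $\sqrt{\Vd(\hx_\tt,\z_\tt)} \le s_\tau$ holds for all $t\ge0,\ \tau\in[0,T]$ and that $\sqrt{\Vd(\x_t,\hx_t)} \le \epsilon$ holds for all $t\ge0$; and then use these together with the constraint-tightening constants optimized in \eqref{eq:sdp} to conclude that the true closed-loop trajectory $(\x,\bu)$ satisfies \eqref{eq:con_sys} and \eqref{eq:con_obs}. The key technical device in both stages is the incremental Lyapunov function $\Vd(a,b)=(a-b)^\top\Pd(a-b)$ with $\Pd=\Xd^{-1}$, and the fact that LMIs \eqref{eq:sdp_lmi_rpi_delta}, \eqref{eq:sdp_lmi_rpi_eps}, \eqref{eq:sdp_lmi_sys}, \eqref{eq:sdp_lmi_obs} are precisely (via Schur complements and the change of coordinates $\Kd=\Yd\Pd$) differential/dissipation inequalities for these functions.

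First I would treat the controller error $\bdelta=\hx-\z$. Differentiating $\Vd(\hx,\z)$ along the estimated dynamics \eqref{eq:observer} driven by input \eqref{eq:u_cl} versus the nominal dynamics $\dz=f(\z,\bv)$, and using the geodesic parametrization $\gxs=\z+s\bdelta$ with feedback \eqref{eq:kappad}, one writes $\dx$ of the error in closed loop via the integrated Jacobian $\int_0^1 (A-B\Kd)(\gxs)\,ds$ plus the observer-injection term $L(\y-\hx)=L(C\x+F\hbmeta-\hx)=LC\beps+LF\hbmeta$ (using $\y-\hx$ with $\hx=\x-\beps$). LMI \eqref{eq:sdp_lmi_rpi_delta}, multiplied left/right by $\Pd$ and by the geodesic direction, yields $\tfrac{d}{dt}\Vd(\hx,\z)\le -\lambdadelta\Vd(\hx,\z)+\lambdadelta\delta^2$-type bound once $\Vd(\x,\hx)\le\epsilon^2$; combined with the contraction LMI \eqref{eq:sdp_lmi_contr} (rate $\rho$) and the definition $\bwo$ capturing the residual input term, this gives the comparison ODE $\dot s_\tau=-\rho s_\tau+\bwo$ and hence \eqref{eq:s_tube} by a standard comparison-lemma argument, provided the tube is initialized consistently. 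In parallel, LMI \eqref{eq:sdp_lmi_rpi_eps} is the dissipation inequality for $\Vd(\x,\hx)$ along \eqref{eq:sys_xdot} and \eqref{eq:observer}: the estimation error satisfies $\dot\beps=(A-LC)\beps + E\w - LF\hbmeta + \text{(higher-order/bias)}$, and since $\hw\in\hwb\oplus\hW$ and $\hbmeta\in\hH$ the driving term is bounded by its vertex values $\opvert(\hW),\opvert(\hH)$ appearing in \eqref{eq:sdp}; convexity then shows $\epsilon$ is a robust positively invariant level, giving \eqref{eq:eps_tube}.

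Second, given the two bounds, I would convert them into pointwise constraint satisfaction. For system constraints, write $\gjs(\x,\bu)=\gjs(\z,\bv)+\Ljs\big[\begin{smallmatrix}\bu-\bv\\\x-\z\end{smallmatrix}\big]$; split $\x-\z=\beps+\bdelta$ and $\bu-\bv=\int_0^1\Kd(\gxs)ds\,\bdelta$ (plus the analogous observer piece), then bound the linear functional of each error by its $\Pd$-weighted norm via Cauchy--Schwarz: $\Ljs[\,\cdot\,]\le \cjs\sqrt{\Vd}$ where $\cjs$ is exactly the constant certified by the Schur-complement LMI \eqref{eq:sdp_lmi_sys} (and $\cjso$ handles the part of the constraint row that only sees the observer error, whence the case split on $j$). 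Summing, $\gjs(\x,\bu)\le\gjs(\z,\bv)+\cjs s_\tau+\cjso\epsilon\le 0$ by the tightening hypothesis. For obstacle constraints, since $\p=M\x$, LMI \eqref{eq:sdp_lmi_obs} certifies $\norm{M(a-b)}\le\co\sqrt{\Vd(a,b)}$, so $|\gjtto(\p)-\gjtto(M\z)|\le\norm{\Ljtto}\,\norm{M(\x-\z)}\le\co(s_\tau+\epsilon)$ using $\norm{\Ljtto}=1$ and again $\x-\z=\beps+\bdelta$; the tightening of \eqref{eq:con_obs} by $\co(s_\tau+\epsilon)$ then closes the argument.

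The main obstacle I anticipate is the first stage — rigorously turning the matrix inequalities \eqref{eq:sdp_lmi_rpi_delta}--\eqref{eq:sdp_lmi_rpi_eps} into the scalar differential inequalities for $\Vd(\hx,\z)$ and $\Vd(\x,\hx)$ in the \emph{nonlinear} closed loop. This requires the standard contraction-metric manipulation of integrating the Jacobian along the geodesic and controlling the cross term between controller and observer error (the off-diagonal $LC\Xd$ block), and it is where the hyperparameters $\lambdadelta,\lambdadeltaeps,\lambdaeps$ and the constant $\bwo$ must be chosen consistently. I would lean on \cite{step2025guide} for these computations, since the excerpt explicitly defers the detailed derivation there, and present the remaining steps (stage two) as essentially bookkeeping with Cauchy--Schwarz and the Schur complements.
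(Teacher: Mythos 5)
Your proposal is correct in outline and follows essentially the same route as the paper, whose own proof of Proposition~\ref{prop:robust} consists entirely of a citation to \cite{step2025guide}: the standard two-stage argument of (i) deriving the $s$- and $\epsilon$-tube bounds from the contraction and RPI LMIs \eqref{eq:sdp_lmi_contr}--\eqref{eq:sdp_lmi_rpi_eps} and (ii) converting them into constraint satisfaction via the Schur-complement LMIs \eqref{eq:sdp_lmi_sys}--\eqref{eq:sdp_lmi_obs} and Cauchy--Schwarz. You even defer the delicate nonlinear differential-inequality derivations to the same reference the paper does, so there is nothing to reconcile.
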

\begin{proof}
    See \cite{step2025guide}.
\end{proof}

\subsection{TMPC formulation}\label{subsec:tmpc_formulation}
Given the robust output-feedback quantities designed in the previous section, we formulate the TMPC problem at time $t$ as follows:
\begin{subequations}\label{eq:tmpc}
    \begin{alignat}{2}
        \underset{\substack{\z_{\cdot|t},\bv_{\cdot|t}}}{\operatorname{min}}\ \ &\mathrlap{\Jf(\z_{\Tt},\x_{\tpT}^\mathrm{r}) + \int_{0}^{T} \Js(\z_{\tt},\bv_{\tt},\r_{\tpt})\ d\tau,}&&\hspace{200pt} \label{eq:tmpc_obj}\\
        \operatorname{s.t.}\ &\z_{\0t} = \hx_t,&& \label{eq:tmpc_z0}\\
        &s_0 = 0,&& \label{eq:tmpc_s0}\\
        &\dz_{\tt} = f(\z_{\tt},\bv_{\tt}) + E\hwb,&& \label{eq:tmpc_zdot}\\
        &\ds_\tau = -\rho s_\tau + \bwo,&& \label{eq:tmpc_sdot}\\
        &\gjs(\z_{\tt},\bv_{\tt}) + \cjs s_\tau + \cjso \epsilon \leq 0,&&\ j \in \Ns \label{eq:tmpc_sys}\\
        &\gjtto(M\z_{\tt}) + \co (s_\tau+\epsilon) \leq 0,&&\ j \in \No \label{eq:tmpc_obs}\\
        &(\z_{\Tt},s_T,\epsilon) \in \Xf(\xr),&&\label{eq:tmpc_term}\\
        &\tau \in [0,T],\notag
    \end{alignat}
\end{subequations}
with reference trajectory $\r=[{\xr}^\top {\ur}^\top]^\top$, stage cost $\Js(\z,\bv,\r)=\norm{\z-\xr}_Q^2+\norm{\bv-\ur}_R^2, Q \succ 0, R \succ 0$, terminal cost $\Jf(\z,\xr)=\norm{\z-\xr}_P^2, P \succ 0$, and prediction horizon $Tt\geq \Ts0$. In this formulation, $Q$ and $R$ can be manually tuned. Terminal cost $\Jf(\z,\xr)$ and terminal set $\Xf(\xr)$ are designed as described in Section~\ref{subsec:tmpc_term}.

\subsection{Offline terminal ingredients design}\label{subsec:tmpc_term}
The goal of this section is to find a terminal control law that renders $\Xf(\xr)$ invariant and ensures that we know a lower bound on the decrease of $\Jf(\z,\xr)$ over time. These properties are used to prove the recursive feasibility and trajectory tracking in Section~\ref{subsec:analysis}, respectively.

The following proposition states a suitable design for terminal set $\Xf(\xr)$ and terminal cost $\Jf(\z,\xr)$:
\begin{proposition}[Terminal ingredients]\label{prop:term}
    Terminal control law $\kappafhxr\coloneqq\ur+\kappad(\hx,\xr)$ ensures that:
    \begin{enumerate}
        \item there exists a terminal set scaling $\alpha>0$ such that
        \begin{align}\label{eq:Xf}
            &\Xf(\xr) \coloneqq\notag\\
            &\left\{\z \in \X, s \in \R, \epsilon \in \R \ \middle|\ \sqrt{\Vd(\z,\xr)} + s_T + \epsilon \leq \alpha \right\},
        \end{align}
        is robustly positive invariant under control law $\kappafhxr$ with $\xr$ satisfying \eqref{eq:tmpc_sys} and \eqref{eq:tmpc_obs} with $s_\tau$ replaced by the invariant tube size $s_\infty=\frac{\bwo}{\rho}$,
        \item tightened system constraints \eqref{eq:tmpc_sys} and obstacle avoidance constraints \eqref{eq:tmpc_obs} are satisfied in $\Xf(\xr)$;
        \item terminal cost $\Jf(\z,\xr)$ satisfies the following decrease property:
        \begin{equation}\label{eq:Jf_decrease}
            \begin{aligned}
                &\Jf(\z_{\TTt}^*,\x_{\tpTsT}\tr) - \Jf(\z_{\Tt}^*,\x_{\tpT}\tr) \leq\\
                &-\int_{T}^{T+\Ts}\Js(\z_{\tt}^*,\bv_{\tt}^*,\r_{\tpt}) d\tau,
            \end{aligned}
        \end{equation}
        with terminal cost matrix $P$ computed using SDP \eqref{eq:sdp_P}.
    \end{enumerate}
\end{proposition}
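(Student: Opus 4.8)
The plan is to prove the three items separately, each resting on the same three facts: (a) under the feedback part of $\kappafhxr$ the incremental Lyapunov function $\Vd$ is $\rho$-contractive, as enforced by \eqref{eq:sdp_lmi_contr}; (b) the auxiliary tube variable obeys $\dot s_\tau=-\rho s_\tau+\bwo$ with globally attractive fixed point $s_\infty=\bwo/\rho$; and (c) the reference is dynamically feasible, $\dxr=f(\xr,\ur)+E\hwb$, with $(\xr,\ur)$ satisfying \eqref{eq:tmpc_sys}, \eqref{eq:tmpc_obs} evaluated at $s_\tau=s_\infty$. As a preliminary step I would record that, applying the terminal controller to the nominal model, the nominal tracking error $e_\tt\coloneqq\z_\tt-\xr_\tpt$ evolves as $\dot e=f(\z,\bv)-f(\xr,\ur)$ with the input correction $\bv-\ur=\int_0^1\Kd(\xr+se)\,ds\,e$; writing $f(\z,\bv)-f(\xr,\ur)$ as the integral of $A+B\Kd$ along the straight-line geodesic connecting $(\xr,\ur)$ and $(\z,\bv)$ and invoking \eqref{eq:sdp_lmi_contr} yields $\frac{d}{d\tau}\sqrt{\Vd(\z_\tt,\xr_\tpt)}\le-\rho\,\sqrt{\Vd(\z_\tt,\xr_\tpt)}$.

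For item 1 I would lump the three quantities into $W_\tau\coloneqq\sqrt{\Vd(\z_\tt,\xr_\tpt)}+s_\tau+\epsilon$. Using the contraction estimate, $\dot s_\tau=-\rho s_\tau+\bwo$ and $\bwo=\rho s_\infty$, one obtains $\dot W_\tau\le-\rho W_\tau+\rho(s_\infty+\epsilon)$, so the comparison lemma gives $W_\tau\le\max\{W_0,\,s_\infty+\epsilon\}$ for all $\tau\ge0$. Hence every sublevel set $\{W\le\alpha\}$ with $\alpha\ge s_\infty+\epsilon$ is forward invariant under $\kappafhxr$, and Proposition~\ref{prop:robust} — which confines the true closed-loop state to the $s$-/$\epsilon$-tube around the nominal trajectory — upgrades this to robust positive invariance of $\Xf(\xr)$.

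For item 2, take any $(\z,s,\epsilon)\in\Xf(\xr)$ and let $\bv$ be the terminal input prescribed by $\kappafhxr$ with $\z$ in place of $\hx$. The constant $\cjs$ from \eqref{eq:sdp_lmi_sys} was optimized precisely so that $\gjs(\z,\bv)-\gjs(\xr,\ur)\le\cjs\sqrt{\Vd(\z,\xr)}$, and $\co$ from \eqref{eq:sdp_lmi_obs} so that $\norm{M(\z-\xr)}\le\co\sqrt{\Vd(\z,\xr)}$, whence $\gjtto(M\z)-\gjtto(M\xr)\le\co\sqrt{\Vd(\z,\xr)}$ since $\norm{\Ljtto}=1$. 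Combining with the reference tightening $\gjs(\xr,\ur)\le-\cjs s_\infty-\cjso\epsilon$ and $\gjtto(M\xr)\le-\co(s_\infty+\epsilon)$, the tightened left-hand sides of \eqref{eq:tmpc_sys} and \eqref{eq:tmpc_obs} are bounded by $\cjs(\sqrt{\Vd(\z,\xr)}+s-s_\infty)$ and $\co(\sqrt{\Vd(\z,\xr)}+s-s_\infty)$ respectively, both non-positive exactly when $\sqrt{\Vd(\z,\xr)}+s\le s_\infty$, i.e. $W\le s_\infty+\epsilon$. Thus $\alpha=s_\infty+\epsilon>0$ is the scaling that makes items 1 and 2 hold simultaneously.

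For item 3 I would use that SDP \eqref{eq:sdp_P} imposes on $P$ the parameterized Lyapunov inequality $(A(\bzeta)+B(\bzeta)\Kd(\bzeta))^\top P+P(A(\bzeta)+B(\bzeta)\Kd(\bzeta))\preceq-(Q+\Kd(\bzeta)^\top R\,\Kd(\bzeta))$ over the relevant linearization points $\bzeta$. Differentiating $\Jf(\z,\xr)=e^\top P e$ along the nominal closed loop and substituting $\dot e=\int_0^1(A+B\Kd)(\xr+se)\,ds\,e$ gives $\frac{d}{d\tau}\Jf(\z_\tt,\xr_\tpt)\le-\int_0^1\big(\norm{e}_Q^2+\norm{\Kd(\xr+se)\,e}_R^2\big)\,ds$, and by convexity of $\norm{\cdot}_R^2$ (Jensen) the right-hand side is at most $-\big(\norm{e}_Q^2+\norm{\int_0^1\Kd(\xr+se)\,ds\,e}_R^2\big)=-\Js(\z_\tt,\bv_\tt,\r_\tpt)$; integrating over $\tau\in[T,T+\Ts]$ yields \eqref{eq:Jf_decrease}. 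The main obstacles I anticipate are making the nonlinear contraction and decrease estimates rigorous through the straight-line geodesic while keeping the Jensen step pointing the right way in item 3, and the tight coupling between items 1 and 2, which pins $\alpha$ to $s_\infty+\epsilon$ and makes everything hinge on the reference-feasibility and reference-tightening hypotheses being exactly as stated — the ``robust'' part of item 1 then being obtained only indirectly, by invoking Proposition~\ref{prop:robust} to transfer the nominal-level invariance to the true closed loop.
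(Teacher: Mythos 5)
Your proposal is correct and follows essentially the same route as the paper's (very terse) proof: $\rho$-contraction of $\sqrt{\Vd}$ from \eqref{eq:sdp_lmi_contr} combined with the tube dynamics via a comparison argument for invariance with $\alpha\geq\bwo/\rho+\epsilon$, the Schur-complement bounds behind $\cjs,\co$ for constraint satisfaction, and the Lyapunov LMI \eqref{eq:sdp_P_lmi_decrease} for the terminal cost decrease. The paper defers all of these details to its references, so your write-up simply fills in the standard steps (comparison lemma, Jensen for the state-dependent $\Kd$ case) that the paper leaves implicit.
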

\begin{proof}
    1) is proven by showing that terminal set constraint \eqref{eq:tmpc_term} is satisfied at $T+\tau,\tau\geq 0$ given that it is satisfied at $\tau=0$. We know that $\sqrt{\Vd(\z,\xr)}$ contracts with factor $\rho$ under feedback law \eqref{eq:kappad}. It can be shown that this reduction cancels out against the increase in tube size $s_{\Tpt}$ for $\tau>0$. Since $\epsilon$ is constant, \eqref{eq:tmpc_term} is satisfied for $\tau\geq 0$. This result holds for $\alpha\geq\frac{\bwo}{\rho}+\epsilon$. See \cite{step2025guide} for more details.
    2) Since $s_T$ and $\epsilon$ are contained in $\Xf(\xr)$, tightened system constraints \eqref{eq:tmpc_sys} and obstacle avoidance constraints \eqref{eq:tmpc_obs} are satisfied in $\Xf(\xr)$.
    3) LMI \eqref{eq:sdp_P_lmi_decrease} in SDP \eqref{eq:sdp_P} enforces that $\Jf(\z,\xr)$ also decreases according to \eqref{eq:Jf_decrease} \cite{benders2025embedded}.
\end{proof}

\begin{figure*}[!ht]
    \normalsize
    \begin{subequations}\label{eq:sdp_P}
        \begin{align}
            \underset{P}{\opmin}\ &\optrace\,P,\label{eq:sdp_P_obj}\\
            \operatorname{s.t.}\ &P \succeq 0,\label{eq:sdp_P_lmi_P}\\
            &\left(A(\bzeta)+B(\bzeta)\Kd\right)^\top P + P\left(A(\bzeta)+B(\bzeta)\Kd\right) + Q + \Kd^\top R\Kd \preceq 0,\label{eq:sdp_P_lmi_decrease}\\
            &\forall \bzeta \in \Z.\notag
        \end{align}
    \end{subequations}
\end{figure*}

\subsection{Theoretical analysis}\label{subsec:analysis}
Theorem~\ref{thm:rohmpc} formalizes the theoretical guarantees associated with the proposed ROHMPC framework:

\begin{theorem}\label{thm:rohmpc}
    Suppose the robust and TMPC terminal ingredients are designed offline according to Sections~\ref{subsec:tmpc_robust} and~\ref{subsec:tmpc_term}, the system starts at steady-state, i.e., $\dx_0 = 0$, and both the PMPC problem \cite{benders2025embedded} and TMPC problem \eqref{eq:tmpc} are feasible at $t=0$. Then, both problems are recursively feasible, the closed-loop system \eqref{eq:u_cl} satisfies system constraints $(\x_t,\bu_t) \in \Z$ and avoids obstacle collisions $M\x_t \notin \O$ for all $t \geq 0$. Moreover, the average tracking error $\norm{\x_t-\x_t\tr}$ is bounded over time.
\end{theorem}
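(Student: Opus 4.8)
The plan is to follow the standard robust-MPC recursive-feasibility-then-constraint-satisfaction argument, but carried out in the two-layer PMPC/TMPC architecture of \cite{benders2025embedded} with the output-feedback error bounds of Proposition~\ref{prop:robust} inserted into the constraint tightening. The key observation is that Propositions~\ref{prop:robust} and~\ref{prop:term} already package the two hard technical facts — (i) the closed-loop controller and observer errors stay inside the tubes $s_\tau$ and $\epsilon$, so any nominal trajectory satisfying the tightened constraints \eqref{eq:tmpc_sys}--\eqref{eq:tmpc_obs} yields a true closed-loop trajectory satisfying \eqref{eq:con_sys}--\eqref{eq:con_obs}; and (ii) the terminal set $\Xf(\xr)$ is robustly positive invariant under $\kappaf$ and the tightened constraints hold throughout it. What remains is to chain these across sampling instants.

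First I would establish recursive feasibility of the TMPC \eqref{eq:tmpc}. Assume feasibility at time $t$ with optimizer $(\z_\ct^*,\bv_\ct^*,s_\ct)$. For the successor problem at $t+\Ts$, construct the standard candidate: on $\tau\in[0,T-\Ts]$ take the shifted tail $(\z^*_{\tau+\Ts|t},\bv^*_{\tau+\Ts|t})$, and on $\tau\in[T-\Ts,T]$ append the terminal feedback $\kappaf$ from Proposition~\ref{prop:term}. The initial-condition constraint \eqref{eq:tmpc_z0} needs care: $\z_{\0|t+\Ts}=\hx_{t+\Ts}$, whereas the shifted candidate starts at $\z^*_{\Ts|t}$; here I would invoke \eqref{eq:s_tube}, i.e. $\sqrt{\Vd(\hx_{t+\Ts},\z^*_{\Ts|t})}\le s_\Ts$, together with the reset $s_0=0$ and the contraction dynamics $\ds_\tau=-\rho s_\tau+\bwo$, so that reinitializing $s$ at $0$ and re-propagating keeps $s_\tau$ no larger than the values used in the $t$-problem along the shifted horizon — this is exactly the mechanism sketched in the proof of Proposition~\ref{prop:term}, part 1. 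The tightened constraints \eqref{eq:tmpc_sys}--\eqref{eq:tmpc_obs} then hold on the shifted part because they held for the $t$-optimizer with (weakly) larger tube, and on the appended terminal part by Proposition~\ref{prop:term}, part 2; the terminal constraint \eqref{eq:tmpc_term} at the new horizon end holds by robust invariance of $\Xf(\xr)$, Proposition~\ref{prop:term}, part 1. The same shift-and-append argument, applied to the PMPC layer exactly as in \cite{benders2025embedded}, gives recursive feasibility of the planner; the interface assumption is that the PMPC reference $\xr$ fed to the TMPC satisfies the conditions required in Proposition~\ref{prop:term} (namely \eqref{eq:tmpc_sys}--\eqref{eq:tmpc_obs} with $s_\tau\mapsto s_\infty=\bwo/\rho$), which is built into the PMPC tightening.

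Second, given recursive feasibility, constraint satisfaction and collision avoidance are immediate: at every $t$ the TMPC returns a nominal $(\z_\ct^*,\bv_\ct^*)$ satisfying the hypotheses of Proposition~\ref{prop:robust}, and the closed loop runs the control law \eqref{eq:u_cl} with observer \eqref{eq:observer} under $\hw\in\hwb\oplus\hW$, $\hbmeta\in\hH$ — which is precisely the disturbance/noise class certified by the UQ method of Section~\ref{sec:uq} — so Proposition~\ref{prop:robust} yields $(\x_t,\bu_t)\in\Z$ and $M\x_t\notin\O$ for all $t\ge0$. The steady-state start $\dx_0=0$ is what makes \eqref{eq:s_tube}--\eqref{eq:eps_tube} valid from $t=0$ with $s_0=0$ (no initial transient in the tubes), closing the base case.

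Third, for the bounded-average-tracking-error claim I would run a value-function telescoping argument. Let $\bar J_t$ denote the optimal TMPC cost at $t$. Using the terminal decrease property \eqref{eq:Jf_decrease} and optimality of the successor candidate (the same shift-and-append trajectory used above), one obtains $\bar J_{t+\Ts}\le \bar J_t-\int_t^{t+\Ts}\Js(\z_\tt^*,\bv_\tt^*,\r_\tpt)\,d\tau + (\text{cross terms from }\hx\text{ reinitialization})$, in the by-now standard quasi-infinite-horizon fashion. Summing over sampling instants and using $\bar J\ge0$ bounds $\sum\int\Js\,d\tau$, hence bounds the time-average of $\norm{\z_\tt^*-\xr}$ via $Q\succ0$; converting to $\norm{\x_t-\xr_t}$ costs an additive $s_\tau+\epsilon\le s_\infty+\epsilon$ from \eqref{eq:s_tube}--\eqref{eq:eps_tube}, which is bounded. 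The main obstacle, and the place I would spend the most care, is the bookkeeping of the reinitialization gap $\z_{\0|t+\Ts}=\hx_{t+\Ts}\ne\z^*_{\Ts|t}$ simultaneously in the feasibility construction \emph{and} in the cost-decrease inequality: the tube reset $s_0=0$ must be shown to dominate this jump (via $\Vd(\hx_{t+\Ts},\z^*_{\Ts|t})\le s_\Ts^2$ and $\rho$-contraction), and in the cost bound the jump must be absorbed into a uniformly bounded perturbation term rather than accumulating — this is exactly the subtle step where output feedback (the extra $\epsilon$-layer and the observer \eqref{eq:observer}) differs from state feedback, and where I would lean on the detailed estimates in \cite{step2025guide,benders2025embedded}.
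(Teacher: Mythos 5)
Your proposal is correct and follows essentially the same route as the paper's proof, which simply combines Propositions~\ref{prop:robust} and~\ref{prop:term} with the recursive-feasibility and tracking arguments cited from \cite{benders2025embedded} and \cite{step2025guide}. You additionally spell out the shift-and-append candidate, the $s_0=0$ reset absorbing the reinitialization gap via $\sqrt{\Vd(\hx_{t+\Ts},\z^*_{\Ts|t})}\le s_{\Ts}$, and the value-function telescoping for the tracking bound --- details the paper outsources to its references --- and you correctly flag the output-feedback reinitialization bookkeeping as the one genuinely delicate step.
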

\begin{proof}
    The PMPC is recursively feasible following the proof provided in \cite{benders2025embedded}. Tightening the system and obstacle avoidance constraints in the PMPC formulation with positive invariant tube $\alpha$ guarantees that the reference trajectory satisfies the robust equivalent of the properties detailed in \cite{benders2025embedded}. Consequently, the combination of Proposition~\ref{prop:term} and the recursive feasibility proof in \cite{step2025guide} guarantees that the TMPC, and thus the ROHMPC framework, is recursively feasible. Therefore, the closed-loop system \eqref{eq:u_cl} robustly satisfies the system constraints and obstacle avoidance constraints. Furthermore, the average tracking error is bounded over time as shown by a combination of Proposition~\ref{prop:term} and the trajectory tracking proof in \cite{step2025guide}.
\end{proof}

\section{Results}\label{sec:results}
This section first presents relevant implementation details (Section~\ref{subsec:results_implementation}) and continues with the results of the proposed uncertainty quantification method (Section~\ref{subsec:results_uq}), the robust output-feedback design and tightening calibration (Section~\ref{subsec:results_robust_design}), and ROHMPC scheme (Section~\ref{subsec:results_rohmpc}).

\subsection{Implementation details}\label{subsec:results_implementation}
This section provides the nominal quadrotor model and its corresponding constraints in Section~\ref{subsubsec:results_model}, the model uncertainty description in Section~\ref{subsubsec:results_uncertainty}, and the software setup used to generate the results in the next sections in Section~\ref{subsubsec:results_software}.

\subsubsection{Quadrotor model and constraints}\label{subsubsec:results_model}
The nominal quadrotor model is based on~\cite{sun2022comparative}:
\begin{align}\label{eq:quad_model}
    \dotp &= \bv\notag\\
    \begin{bmatrix}\dot{\phi}&\dot{\theta}&\dot{\psi}\end{bmatrix}^\top &= {\BIRc}{\bOmega}\notag\\
    \dv &= \BIRr \begin{bmatrix}0&0&\frac{T}{m}\end{bmatrix}^\top - \begin{bmatrix}0&0&g\end{bmatrix}^\top\notag\\
    I\dOmega &= -\bOmega \times I\bOmega + \btau,
\end{align}
with states including 3D positions in inertial frame $\I$, ZYX Euler angles, velocities and angular velocities in the body frame, thrust and torques defined as a linear function of the inputs $\bu = [t_0, t_1, t_2, t_3]^\top$ in which $t_i$ is the thrust of rotor $i$, gravitational constant $g=9.8124$, mass $m=0.617$, inertia matrix $I=\opdiag(0.00164,0.00184,0.0030)$, and coordinate and rate rotation matrices $\BIRc$ and $\BIRr$ to convert coordinates and rates from body to inertial frame. Sine, cosine, and tangent functions are defined as $s\theta = \sin(\theta)$, $c\theta = \cos(\theta)$, $t\theta = \tan(\theta)$, for example.

The quadrotor is subject to the following constraints, derived from the trajectories used to generate the results in Section~\ref{subsec:results_uq}:
\begin{alignat}{2}\label{eq:quad_constraints}
    -4\ \mathrm{m} &\leq p^x,\ p^y &&\leq 4\ \mathrm{m},\notag\\
    0\ \mathrm{m} &\leq p^z &&\leq 4\ \mathrm{m},\notag\\
    -2\ \mathrm{m/s} &\leq v^x,\ v^y,\ v^z &&\leq 2\ \mathrm{m/s},\notag\\
    -0.1\ \mathrm{rad} &\leq \phi,\ \theta,\ \psi &&\leq 0.1\ \mathrm{rad},\notag\\
    -0.3\ \mathrm{rad/s} &\leq \Omega^x,\ \Omega^y,\ \Omega^z &&\leq 0.3\ \mathrm{rad/s},\notag\\
    1.3936\ \mathrm{N} &\leq t_0,\ t_1,\ t_2,\ t_3 &&\leq 1.6336\ \mathrm{N}.
\end{alignat}

\subsubsection{Model uncertainty}\label{subsubsec:results_uncertainty}
The model mismatch  $\w$ arises due to the difference between the simulation of \eqref{eq:quad_model} and the dynamics simulated in the \emph{RotorS} simulator, which are estimated in Section~\ref{subsec:results_uq}. Since the \emph{RotorS} simulator is a physics-based simulator that structurally differs from the nominal model, we choose $E=\Inx$ and $w\in\Rx$. The outputs $\y$ are given by a noisy measurement of the state $\x$ with $C=\Inx$, $F=\Inx$, and $\bmeta\in\Rx$. In this case, $\bmeta$ is generated as uniform noise with the following bounds:
\begin{alignat}{2}\label{eq:meas_noise_bounds}
    -0.00005\ \mathrm{m} &\leq \eta^{p^x},\ \eta^{p^y},\ \eta^{p^z} &&\leq 0.00005\ \mathrm{m},\notag\\
    -0.000628\ \mathrm{m/s} &\leq v^x,\ v^y,\ v^z &&\leq 0.000628\ \mathrm{m/s},\notag\\
    -0.0005\ \mathrm{rad} &\leq \phi,\ \theta,\ \psi &&\leq 0.0005\ \mathrm{rad},\notag\\
    -0.00076\ \mathrm{rad/s} &\leq \Omega^x,\ \Omega^y,\ \Omega^z &&\leq 0.00076\ \mathrm{rad/s}.
\end{alignat}

\subsubsection{Software setup}\label{subsubsec:results_software}
The code corresponding to this work is available at
\begin{center}
    \ifthenelse{\boolean{anonymize}}{https://anonymous.4open.science/r/rohmpc-anonymous-0790}{https://github.com/dbenders1/rohmpc}.
\end{center}
In constructing the software pipeline, special care was taken to ensure modularity and reproducibility. One aspect to highlight is the fact that the code contains an adjusted version of the open-source stack Agilicious \cite{foehn2022agilicious} in order to ensure deterministic \emph{RotorS} simulation results. Moreover, the results are trivial to reproduce as a Docker container is provided, in line with the recommendation in \cite{cervera2018try}.

The results presented in the next sections are generated using a Dell XPS 15 laptop with a 12-core 2.60GHz Intel i7-10750H CPU. We leverage the ACADOS SQP solver \cite{verschueren2021acados} to solve UQ problem \eqref{eq:mhe}, Mosek \cite{andersen2000mosek} to solve SDP \eqref{eq:sdp}, and the ForcesPro NLP solver \cite{zanelli2020forces} to solve PMPC problem \cite{benders2025embedded} and TMPC problem \eqref{eq:tmpc}.

\subsection{Uncertainty quantification}\label{subsec:results_uq}
Recall that the purpose of solving UQ problem \eqref{eq:mhe} is to find bounding boxes $\hW$ and $\hH$ that are used in the robust output-feedback MPC design in Section~\ref{sec:rohmpc}. We solve \eqref{eq:mhe} using IO data obtained by flying 4 different trajectories using the built-in MPC \cite{sun2022comparative} in Agilicious in the \emph{RotorS} simulator. These trajectories are designed to cover a relevant part of the state-input space and consist of clockwise and counter-clockwise circles and lemniscates with radius 1 m and frequency 0.3 Hz.

Note that there is a fundamental limitation in estimating $\w,\bmeta$ using \eqref{eq:mhe}: $\y$ can be explained by both the effect of $\w$ and $\bmeta$, which introduces ambiguity. As a result, the solution to \eqref{eq:mhe} practically converges to either explaining $\y$ mainly by $\w$ or by $\bmeta$, depending on the initialization of $Q$ and $R$. To mitigate this issue, we start from $Q=\Inx$ and $R=\Ineta$ and assume that $\H$ is known, i.e., it can be derived from the sensor datasheets, and add them as constraints to \eqref{eq:mhe}. In this work, we use the bounds in \eqref{eq:meas_noise_bounds}.

Figure~\ref{fig:uq_mle} demonstrate that solving \eqref{eq:mhe} gives a reasonable result: it maximizes the likelihood, yielding a maximum likelihood estimate (MLE), making the estimates the most likely explanation. Since the results are based on the \emph{RotorS} simulator, we have access to the ground truth states. Therefore, we can run \eqref{eq:mhe} with $\y=\x$ and ignore $\bmeta$ to obtain the ground truth $\w$ and $\W$. This allows to compare the estimated disturbance bounds with the ground truth ones, giving a ratio of one in case of perfect estimation. Figure~\ref{fig:uq_ratios} visualizes this value by the dashed green line. Moreover, Figure~\ref{fig:uq_ratios} shows the average of the ratios of lower and upper bounds for each disturbance component with a dash-dotted line and the total mean ratio including one standard deviation with the blue line. Thus, 2 iterations, provide, on average, the best disturbance bound ratios. Correspondingly, we stop Algorithm~\ref{alg:mhe} after 2 iterations, yielding a total computation time of 3408 s. The resulting root mean squared error (RMSE) of all lower and upper disturbance bounds is 0.00156.

\begin{figure}[t]
    \vspace{5pt}
    \centering
    \begin{subfigure}{\columnwidth}
        \centering
        \includegraphics{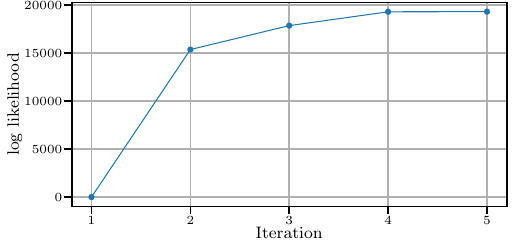}
        \caption{}
        \label{fig:uq_mle}
    \end{subfigure}
    \begin{subfigure}{\columnwidth}
        \centering
        \includegraphics{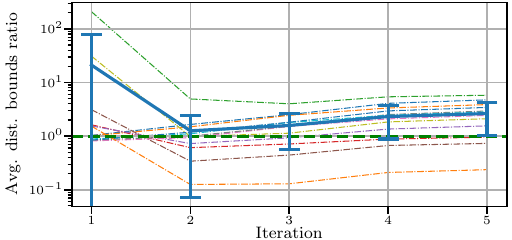}
        \caption{}
        \label{fig:uq_ratios}
    \end{subfigure}
    \caption{Results after 5 iterations of Algorithm~\ref{alg:mhe} based on the clockwise circle trajectory. (a) The log likelihood monotonically increases over the iterations, yielding an MLE. (b) The average ratios of the lower and upper bounds of the estimated disturbance bounds with respect to the ground truth per disturbance component (dash-dotted), the total mean ratio including one standard deviation (blue solid), and the ideal disturbance bound ratio (green dashed). Initially, the ratios are often off by a factor of $100$, but after $2$ iterations most ratios converge relatively close to one and the standard deviation decreases significantly. }
    \label{fig:uq}
\end{figure}

\subsection{Robust output-feedback design}\label{subsec:results_robust_design}
Based on $\hW$ and $\H$ from the previous section, we solve \eqref{eq:sdp} in 295 s. While the resulting feedback $\kappad$ robustly stabilizes the system, the corresponding constraint tightening is overly conservative. To reduce conservatism, we empirically determine the constants used for constraint tightening - $\rho$, $\bwo$, and $\epsilon$ - corresponding to the optimized feedback $\kappad$ and Lyapunov function $\Vdhxz$.

\begin{figure}[t]
    \vspace{5pt}
    \centering
    \includegraphics[width=\columnwidth]{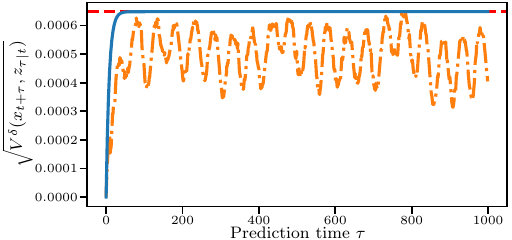}
    \caption{The worst-case Lyapunov error over prediction time $\tau$ (orange dash-dotted), the maximum tube size (red dashed), and the fitted tube based on the calibrated $\rho$ (blue solid).}
    \label{fig:tube_fit}
\end{figure}

To calibrate $\rho$, we track a nominal stable reference trajectory with feedback controller \eqref{eq:kappad} and observer \eqref{eq:observer} and record the nominal reference and the estimated states over time. Using this data, we can compute the tightest s-tube \eqref{eq:s_tube} over multiple steps, as visualized in Figure~\ref{fig:tube_fit}. We repeat this process for multiple grid points of $\rho$ and pick $\rho$ with the smallest tightening. This computation takes 3.5 s using a nominal reference duration of 10 s sampled with the simulator, estimation, and feedback frequency of 500 Hz, and gives $\rho=12.5$.

Given a calibrated $\rho$, we can calibrate $\bwo$ and $\epsilon$ by comparing the estimated state with the one-step ahead predictions of a closed-loop nominal MPC scheme and the ground truth state, respectively. In this simulation, the ground truth state is available. However, in reality one has to pick the more conservative value for epsilon given by SDP \eqref{eq:sdp}. This calibration takes 30.5 s in total, for 144 s of flight time sampled at the TMPC sampling frequency of 100 Hz for $\bwo$ and sampled at 500 Hz for $\epsilon$. The resulting values are $\bwo=0.0337$ and $\epsilon=0.00345$. The trajectories used to calibrate $\bwo$ and $\epsilon$ are more aggressive than the ones produced by the ROHMPC framework described next, so they are reasonable to use to guarantee safety.

\subsection{Closed-loop ROHMPC simulation}\label{subsec:results_rohmpc}
Given the calibrated tightening constants $\rho$, $\bwo$, and $\epsilon$, we can compute $\alpha$ used for the tightening in the PMPC and run the closed-loop ROHMPC scheme. The goal of this scheme is to fly the quadrotor from a starting position to a goal position and safely avoid the obstacle in between. Figure~\ref{fig:rohmpc} visualizes the relevant part of the traversed trajectory (close to the obstacle) and shows that the closed-loop system stays within the total TMPC tube, accurately follows the plan, and safely avoids the obstacle. Furthermore, the total TMPC tube does not grow further than the tube used to tighten the constraints in the PMPC, and all tubes are contained in the obstacle-free space. Finally, the closed-loop system is empirically shown to guarantee constraint satisfaction and recursive feasibility.

\begin{figure}[t]
    \centering
    \includegraphics[width=\columnwidth]{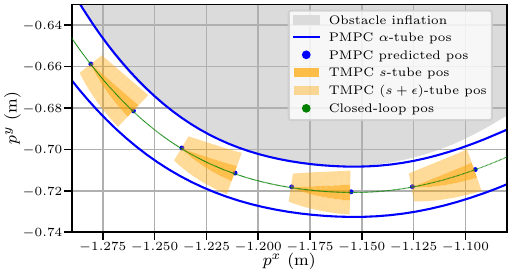}
    \caption{The resulting tubes, plan, and position of the ROHMPC framework in closed-loop simulation with the physics simulator. The predicted tube captures the effect of noise and disturbances, thus robustly ensuring collision avoidance in a systematic fashion.}
    \label{fig:rohmpc}
\end{figure}

\section{Conclusion}\label{sec:conclusion}
Guaranteeing safety on autonomous mobile robots is a challenging topic, especially in the presence of disturbances and measurement noise. Whereas other approaches typically assume specific disturbance effects or known, sometimes unrealistic, bounds on the disturbances and do not consider measurement noise, the aim of this work is to take a step toward relaxing these assumptions and work towards reproducible and robust results on real robotic systems. Therefore, this paper proposed a robust MPC design pipeline to guarantee robust constraint satisfaction and recursive feasibility. Furthermore, it verified these properties using a reproducible software stack in a closed-loop simulation of the proposed ROHMPC framework.

\ifthenelse{\boolean{anonymize}}{}{
  \section*{Acknowledgment}
The authors would like to thank Sihao Sun for the introduction to the Agilicious stack and the fruitful discussions on quadrotor dynamics.

}

\clearpage
\bibliographystyle{IEEEtran}
\bibliography{IEEEabrv,mybib.bib}

% Possibly insert a biography section here

\vfill

\end{document}